\author{
    Dustin Morrill\\
    Sony AI;\\
    Department of Computing Science\\
    University of Alberta; Amii\\
    Edmonton, Alberta, Canada\\
    \texttt{morrill@ualberta.ca}\\
    \And
    Esra'a Saleh\\
    Department of Computing Science\\
    University of Alberta; Amii\\
    Edmonton, Alberta, Canada\\
    \texttt{esraa1@ualberta.ca}\\
    \AND
    Michael Bowling\\
    Department of Computing Science\\
    University of Alberta; Amii\\
    Edmonton, Alberta, Canada\\
    \texttt{mbowling@ualberta.ca}\\
    \And
    Amy Greenwald\\
    Department of Computer Science\\
    Brown University\\
    Providence, Rhode Island, United States\\
    \texttt{amy\_greenwald@brown.edu}
}
\title{Interpolating Between Softmax Policy Gradient and Neural Replicator Dynamics with Capped Implicit Exploration}
\DeclarePairedDelimiter{\abs}{\lvert}{\rvert}
\DeclarePairedDelimiter{\subex}{(}{)}
\DeclarePairedDelimiter{\subblock}{[}{]}
\DeclarePairedDelimiter{\tuple}{(}{)}
\DeclarePairedDelimiter{\set}{\{}{\}}
\DeclarePairedDelimiter{\stopGrad}{\llbracket}{\rrbracket}
\newcommand{\reals}{\mathbb{R}}
\newcommand{\Simplex}{\Delta}
\newcommand{\simplex}{\Simplex}
\newcommand{\bs}[1]{\bm{#1}}
\newcommand{\expectation}{\mathbb{E}}
\newcommand{\E}{\expectation}
\newcommand{\probability}{\mathbb{P}}
\newcommand{\Prob}{\probability}
\newcommand{\ones}{\bs{1}}
\newcommand{\bigO}[1]{\operatorname{\mathcal{O}}\subex{#1}}
\newcommand{\smallo}[1]{\operatorname{o}\subex{#1}}
\newcommand{\ip}[2]{\langle #1, \, #2 \rangle}
\newcommand{\ind}[1]{\mathbb{1}\set*{#1}}
\newcommand{\given}{\,|\,}
\newcommand{\where}{\;|\;}
\newcommand{\PureStratSet}{\mathcal{X}}
\newcommand{\PureStrategySet}{\PureStratSet}
\newcommand{\pureStrat}{x}
\newcommand{\PureStrat}{X}
\newcommand{\utility}{\upsilon}
\newcommand{\strategy}{\policy}
\newcommand{\strat}{\strategy}
\newcommand{\Actions}{\mathcal{A}}
\newcommand{\regret}{\rho}
\newcommand{\maxReward}{\supReward}
\newcommand{\grad}{\nabla}
\newcommand{\stepSize}{\alpha}
\newcommand{\infoSet}{I}
\newcommand{\Histories}{\mathcal{H}}
\newcommand{\TerminalHistories}{\mathcal{Z}}
\newcommand{\emptyHistory}{\varnothing}
\newcommand{\odpDecision}{\theta}
\newcommand{\est}{\widehat}
\newcommand{\policy}{\pi}
\newcommand{\reward}{r}
\newcommand{\PolicySet}{\Pi}
\newcommand{\stateValueFn}{q}
\newcommand{\RewardFn}{\reward}
\newcommand{\ObservationFn}{\omega}
\newcommand{\ObservationSet}{\mathcal{O}}
\newcommand{\DaimonStratSet}{\Sigma}
\newcommand{\DaimonStrategySet}{\DaimonStratSet}
\newcommand{\RandomReturn}{G}
\newcommand{\AgentStateSet}{\mathcal{S}}
\newcommand{\agentState}{s}
\newcommand{\daimonStrat}{\sigma}
\newcommand{\updateFn}{u}
\newcommand{\belief}{\xi}
\newcommand{\CixCorrection}{\beta}
 \renewcommand{\maxReward}{R^{\textsc{max}}}
\renewcommand{\stateValueFn}{v}
\newcommand{\actionValueFn}{q}
\newcommand{\ixParam}{\eta}
\newcommand{\maxReturn}{G^{\textsc{max}}}
\DeclareRobustCommand\onedot{\futurelet\@let@token\@onedot}
\def\@onedot{\ifx\@let@token.\else.\null\fi\xspace}
\def\eg/{\emph{e.g}\onedot} \def\Eg/{\emph{E.g}\onedot}
\def\ie/{\emph{i.e}\onedot} \def\Ie/{\emph{I.e}\onedot}
\def\cf/{\emph{c.f}\onedot} \def\Cf/{\emph{C.f}\onedot}
\def\vs/{\emph{vs}\onedot} \def\Vs/{\emph{Vs}\onedot}
\def\etc/{\emph{etc}\onedot}
\def\wrt/{with respect to} \def\dof/{d.o.f\onedot}
\def\etal/{\emph{et al}\onedot}
\def\viceversa/{\emph{vice-versa}}
\def\ow/{\emph{o.w}\onedot}
\def\whp/{w.h.p\onedot}
\def\apriori/{\emph{a priori}} \def\Apriori/{\emph{A priori}}
\def\ala/{\`{a} la}
\def\naive/{na\"{\i}ve} \def\Naive/{Na\"{\i}ve}
\def\rmPlus/{regret matching\textsuperscript{+}}
\def\rrmPlus/{RRM\textsuperscript{+}}
\def\rcfrPlus/{RCFR\textsuperscript{+}}
\def\cfrPlus/{CFR\textsuperscript{+}}
\def\NashConv/{\textsc{NashConv}}
\def\NashConvAUC/{$\overline{\textsc{NashConv}}$}
\def\heads/{\textsc{heads}}
\def\tails/{\textsc{tails}}
\def\even/{\textsc{even}}
\def\odd/{\textsc{odd}}
\def\docName/{thesis}
\def\DocName/{Thesis}
\def\rampName/{ramp}
\def\RampName/{Ramp}
\def\waitObservation/{\textsc{wait}}
\newcommand\safeIncCounter[1]{\@ifundefined{c@#1}{\newcounter{#1}\stepcounter{#1}}{\stepcounter{#1}}}
\definecolor{offWhite}{RGB}{240,240,240}
\definecolor{grey}{RGB}{180,180,180}
\definecolor{darkgreen}{RGB}{0,125,0}
\definecolor{lime}{RGB}{255,200,0}
\definecolor{amiiBlue}{RGB}{16,72,118}
\definecolor{amiiPink}{RGB}{241,97,119}
\definecolor{amiiYellow}{RGB}{248,209,109}
\definecolor{amiiPurple}{RGB}{123,105,145}
  \newtheorem{theorem}{Theorem}
  \newtheorem{lemma}{Lemma}
  \newtheorem{definition}{Definition}
\newif\csname ifGin@setpagesize\endcsname
\let\cite\citep
\newcommand{\parencite}{\citep}
\newcommand{\textcite}{\citet}
\begin{document}
\maketitle
\begin{abstract}
  Neural replicator dynamics (NeuRD) is an alternative to the foundational softmax policy gradient (SPG) algorithm motivated by online learning and evolutionary game theory.
  The NeuRD expected update is designed to be nearly identical to that of SPG, however, we show that the Monte Carlo updates differ in a substantial way: the importance correction accounting for a sampled action is nullified in the SPG update, but not in the NeuRD update.
  Naturally, this causes the NeuRD update to have higher variance than its SPG counterpart.
  Building on implicit exploration algorithms in the adversarial bandit setting, we introduce capped implicit exploration (CIX) estimates that allow us to construct NeuRD-CIX, which interpolates between this aspect of NeuRD and SPG.
  We show how CIX estimates can be used in a black-box reduction to construct bandit algorithms with regret bounds that hold with high probability and the benefits this entails for NeuRD-CIX in sequential decision-making settings.
  Our analysis reveals a bias--variance tradeoff between SPG and NeuRD, and shows how theory predicts that NeuRD-CIX will perform well more consistently than NeuRD while retaining NeuRD's advantages over SPG in non-stationary environments.
\end{abstract}

\keywords{
  [Machine Learning] Reinforcement Learning,
  [Multiagent Systems] Multiagent Learning,
  [Machine Learning] Online Learning \& Bandits
}
\acknowledgements{
  During this project, Dustin Morrill, Esra'a Saleh, and Michael Bowling were supported by the Alberta Machine Intelligence Institute (Amii), CIFAR, and NSERC.
  Amy Greenwald is supported in part by NSF Award CMMI-1761546.
}
\startmain

\section{Introduction}

By design, the difference between the expected update procedures of softmax policy gradient (SPG; \citealp{Williams1992reinforce,Sutton00PolicyGradient}) and neural replicator dynamics (NeuRD; \citealp{hennes2020neurd}) is very small; the SPG update is effectively a rescaling of the NeuRD update based on the SPG agent's current policy.
Their Monte Carlo updates however, have an important difference: the importance correction accounting for a sampled action is nullified in the SPG update but not in the NeuRD update, naturally leading the NeuRD update to have larger variance.
In this work, we introduce a continuous bias--variance tradeoff between SPG and NeuRD via the NeuRD with capped implicit exploration (CIX) algorithm.
Implicit exploration~\parencite{kocak2014ix} is a technique for carefully adding bias to importance corrected utility function estimates in the adversarial bandit setting and we modify this idea by introducing a cap on the amount of bias that can be added when the sampled action probability is already large.
We show that our capped implicit exploration (CIX) estimates facilitate the easy construction of bandit algorithms with regret bounds that hold with high probability via a black box reduction to full-monitoring regret minimization.
These high probability regret bounds in the bandit setting imply high probability optimality and regret bounds for NeuRD-CIX in sequential decision-making settings.

\section{Background}

\subsection{Partially Observable History Processes}

A \emph{partially observable history process} (\emph{POHP}; ~\citealp{pohp}) models the perspective of an agent in a domain much larger and more complex than themself.
The titular \emph{history} is a record of \emph{actions}.
The finite set of actions allowed after each history $h \in \Histories$ is determined by $\Actions(h)$.
The agent's limitations in controlling the environment through their actions is personified as a \emph{daimon}, who chooses an action after each of the agent's choices sampled from a history-dependent \emph{policy}, $\daimonStrat \in \DaimonStrategySet$, $\daimonStrat(h) \in \simplex\subex*{\Actions(h)}$,
where $\simplex\subex*{\Actions(h)}$ is the probability simplex over the set of actions immediately following \emph{passive history} $h \in \Histories_{\ObservationSet}$ where the agent has acted last.
An \emph{observation function},
$\ObservationFn : \Histories_{\Actions} \to \ObservationSet$
models partial observability of the daimon's actions, where $\Histories_{\Actions}$ represents the set of \emph{active histories} where the agent is next to act and $\ObservationSet$ is the set of possible observations.\footnote{The union of the passive and active histories forms the complete set of histories, \ie/, $\Histories = \Histories_{\ObservationSet} \cup \Histories_{\Actions}$.}
The process terminates at history $h \in \Histories_{\Actions}$ with probability $1 - \gamma(h)$ where $\gamma : \Histories_{\Actions} \to [0, 1]$ is a \emph{continuation probability function}.

The agent is responsible for updating their own internal \emph{agent state}, $\agentState \in \AgentStateSet$, based on their actions and observations.
We summarize the agent's internal update mechanisms as an \emph{update function}
$\updateFn : \Histories \to \AgentStateSet$.
The agent state is conditioning information for the agent's \emph{policy} $\policy \in \PolicySet$ where
$\policy(\agentState) \in \simplex(\Actions(\agentState))$,
$\abs{\Actions(\agentState)} \le \infty$,
and
$\Actions(\agentState) = \Actions(h)$
for any active history $h$ in agent state $\agentState$'s \emph{information set},
$\infoSet(\agentState) = \set{ h' \where \updateFn(h') = \agentState }$.
The agent accumulates \emph{rewards} from a bounded \emph{reward function},
$\RewardFn : \ObservationSet \to [-\maxReward, 0]$.
We assume rewards are non-positive without loss of generality.
The \emph{return} (cumulative reward) that the agent acquires from active history $h \in \Histories_{\Actions}$ is
$\RandomReturn_h(\strat; \daimonStrat)
  = \sum_{i = 1}^{\infty}
    Y_i \RewardFn\big( \ObservationFn(H_i) \big)$,
where the initial history in the trajectory is $H_1 = h$,
the agent's action on each step is $A_i \sim \strat\big( \updateFn(H_i) \big)$,
the daimon's action on each step is $B_i \sim \daimonStrat(H_i A_i)$,
the history is updated as the concatenation $H_{i + 1} = H_i A_i B_i$, and
the continuation indicator is the product $Y_{i + 1} = Y_i \Gamma_i \in \set{0, 1}$ with $Y_1 = 1$ and $\Gamma_i \sim \gamma(H_i)$.

To make a POHP easier to work with, one of two different assumptions are often made.

A POHP could have a \emph{finite horizon}, where each history is a prefix of a \emph{terminal history}, $z \in \TerminalHistories \subseteq \Histories_{\Actions}$, where $\gamma(z) = 0$, which aligns the POHP with a given player's view of an extensive-form game (see, \eg/, \textcite{osborne94}).
This is also often combined with the assumption that the agent is not \emph{absent-minded}, that is, the agent never encounters the same agent state more than once during their lifetime.\footnote{The agent can ensure this by keeping track of how many actions they have taken. Such updates are said to be \emph{timed}.}
In this case, the \emph{realization probability} of an agent state can be defined as the sum of \emph{reach probabilities} across histories in that state's information set.
Formally,
$\Prob_{\strat, \daimonStrat}[\agentState]
  = \sum_{h \in \infoSet(\agentState)}
    \Prob_{\strat, \daimonStrat}[h]$
where
$\Prob_{\strat, \daimonStrat}[h] =
  \prod_{i = 1}^{\abs{h}} \Prob_{\strat, \daimonStrat}[h_i \given h_{< i}]$,
$h_i$ is the action at the $i^{\text{th}}$ position in the history,
$h_{< i}$ is the $i$-length prefix of $h$ with $h_{< 1} = \emptyHistory$ (the empty history), and
$
  \Prob_{\strat, \daimonStrat}[h_i \given h_{< i}]
    =
      \gamma(h_{< i}) \strat\subex*{h_i \given \updateFn(h_{< i})}
$ if $h_{< i} \in \Histories_{\Actions}$
and $\daimonStrat(h_i \given h_{< i})$ otherwise.

Alternatively, a POHP can represent a \emph{discounted Markov decision process} (\emph{MDP}) by assuming that (i) the daimon chooses actions according to its own state, (ii) the observation function reveals the daimon's state, and (iii) the continuation probability is not always one.
In this case, the daimon's state reproduces the MDP state, the daimon's policy $\daimonStrat$ is the MDP state \emph{transition distribution}, and continuation probabilities represent \emph{discount factors}.
In this case, the probability of realizing a given agent state $\agentState$ can be defined by marginalizing the probability of transitioning to $\agentState$ in $k$ actions, \ie/,\footnote{The $k$-step transition distribution is derived from the agent's policy and the transition distribution.}
$
  \Prob_{\strat, \daimonStrat}[\agentState]
    = \sum_{\bar{\agentState} \in \AgentStateSet}
      \sum_{k = 0}^{\infty}
        \Prob_{\strat, \daimonStrat}[\agentState, k \given \bar{\agentState}]
          \Prob_{\strat, \daimonStrat}[\bar{\agentState} \given \updateFn(\emptyHistory)].
$
For POHPs that are repeatedly evaluated, we assume that the continuation probabilities are set to ensure the POHP terminates almost surely, \eg/, by setting $\gamma(\cdot) < 1$.

Agent state realization probabilities give rise to a \emph{belief},
$\belief_{\agentState}^{\strat, \daimonStrat}:
  h \mapsto
    \Prob_{\strat, \daimonStrat}[h] / \Prob_{\strat, \daimonStrat}[\agentState]$,
about the relative likelihood of each history in the information set of each agent state $\agentState$,
and the expected return from $\agentState$, denoted
$\stateValueFn_{\agentState}(\strat; \daimonStrat)
  = \E_{H \sim \belief_{\agentState}^{\strat, \daimonStrat}}\subblock*{
    \RandomReturn_H(\strat; \daimonStrat)
  }$.
The \emph{action value} function is the expected return from $\agentState$ given an action, \ie/,
$\actionValueFn_{\agentState}(a, \strat; \daimonStrat)$,
and the \emph{advantage} of action $a$ is the difference
$
  \regret_{\agentState}(a, \strat; \daimonStrat)
    =
      \actionValueFn_{\agentState}\subex*{a, \strat; \daimonStrat}
      - \stateValueFn_{\agentState}\subex*{\strat; \daimonStrat}
$.

\subsection{The Bandit Setting and Implicit Exploration}

Consider a repeated POHP where, on each round $t$, the agent chooses a \emph{pure (deterministic) policy}, $\PureStrat^t \in \PureStratSet$ and then receives the expected return for that strategy,
$\utility(\PureStrat^t; \daimonStrat^t) = \E\subblock*{\RandomReturn_{\emptyHistory}(\PureStrat^t; \daimonStrat^t)}$,
by playing $\PureStrat^t$ out in the POHP with the daimon's policy on round $t$, $\daimonStrat^t$.
This describes an (adversarial) \emph{bandit} problem, where the set of pure policies is a finite set of ``bandit arms'' and the daimon determines the agent's payoff function on each round through their policy.
The \emph{full monitoring} version of the bandit problem is a simplification where the agent instead receives a return for each pure policy on each round.
Often, bandit algorithms can be constructed via a black-box reduction to full monitoring.

Typically, the goal is for the agent to perform nearly as well as the best pure policy in hindsight, that is, to ensure sublinear \emph{cumulative regret}:
$\regret^{1:T}(\pureStrat)
  = \sum_{t = 1}^T \regret(\pureStrat, \PureStrat^t; \daimonStrat^t) \in \smallo{T}$
for all $\pureStrat \in \PureStrategySet$, where the \emph{instantaneous regret} is the difference
$\regret(\pureStrat, \PureStrat^t; \daimonStrat^t)
  = \utility(\pureStrat; \daimonStrat^t) - \utility(\PureStrat^t; \daimonStrat^t)$.
We call agents that achieve this condition \emph{hindsight rational} since they achieve approximate rationality (optimality) in hindsight over their lifetime~\parencite{hsr2020}.
In general, agents must randomize to be hindsight rational, so we assume that $\PureStrat^t$ is sampled from a \emph{mixed policy}
$\strat^t \in \simplex(\PureStrategySet)$
on each round $t$, and we focus on achieving hindsight rationality with high probability under this sampling procedure.

Implicit exploration~\parencite{kocak2014ix} is a method for estimating unobserved payoffs in the bandit setting that controls estimation variance by introducing some bias.
Given pure policy $\PureStrat \sim \strat$, the implicit exploration estimated utility function is
$\est{\utility}: \pureStrat; \PureStrat, \ixParam, \strat, \daimonStrat \mapsto \frac{
  \ind{\PureStrat = \pureStrat}\utility(\pureStrat; \daimonStrat)
}{
  \strat(\pureStrat) + \ixParam
}$
for an implicit exploration parameter $\ixParam \in [0, 1]$.
Thanks to the fact that payoffs are non-positive so that large payoffs are closer to zero, $\est{\utility}$ always overestimates payoffs, which encourages greater exploration.
Implicit exploration was originally used to construct the Exp3-IX algorithm~\parencite{kocak2014ix,neu2015moreIx}.

\section{SPG and NeuRD Monte-Carlo Updates}

SPG updates a parameterized policy over time.
After $t$ updates, SPG deploys policy
$\policy^{t + 1}:
  \agentState \mapsto
    \exp\subex*{f(\agentState, \cdot; \odpDecision^{t + 1})} / \ip{\ones}{\exp\subex*{f(\agentState, \cdot; \odpDecision^{t + 1})}}$
with $d$ parameters
$\odpDecision^{t + 1} \in \reals^d$
and function approximator
$f: \AgentStateSet \times \Actions(\cdot) \times \reals^d \to \reals$,
where $\Actions(\cdot)$ represents the set of allowed actions given the first argument of $f$
and $\ip{\cdot}{\cdot}$ is the vector inner-product operator.
The Monte-Carlo or REINFORCE~\parencite{Williams1992reinforce} SPG update generates $\odpDecision^{t + 1}$ by changing $\odpDecision^t$ according to stochastic gradient descent on a Monte Carlo estimate of the expected return, with $\odpDecision^1$ arbitrarily.
Let $H$ be a history in a random trajectory sampled by playing out $\strat^t$ and $\daimonStrat^t$ and let $A \sim \strat^t(\updateFn(H))$.
The Monte-Carlo SPG update is then
\begin{align}
  \odpDecision^{t + 1}
    &= \odpDecision^t
      + \stepSize^t \overbrace{
        \dfrac{
          1
        }{
          \strat^t\subex*{A \given \updateFn(H)}
        }
      }^{\text{Importance correction.}}
        \RandomReturn_{HA}(\strat^t; \daimonStrat^t)
        \grad_{\odpDecision^t} \strat^t\subex*{A \given \updateFn(H)}\\
    &= \odpDecision^t
      + \stepSize^t
        \dfrac{
          \strat^t\subex*{A \given \updateFn(H)}
        }{
          \strat^t\subex*{A \given \updateFn(H)}
        }
        \RandomReturn_{HA}(\strat^t; \daimonStrat^t)
          \sum_{a \in \Actions(H)}
            \subex*{
              \ind{a = A}
              - \strat^t\subex*{a \given \updateFn(H)}
            }
              \grad_{\odpDecision^t} f\subex*{\updateFn(H), a; \odpDecision^t}\\
    &= \odpDecision^t
      + \stepSize^t
      \RandomReturn_{HA}(\strat^t; \daimonStrat^t)
        \sum_{a \in \Actions(H)}
          \subex*{
            \ind{a = A}
            - \strat^t\subex*{a \given \updateFn(H)}
          }
            \grad_{\odpDecision^t} f\subex*{\updateFn(H), a; \odpDecision^t}.
    \label{eq:mcSpgUpdate}
\end{align}

NeuRD uses the same policy parameterization, but its update is based on the gradient of a distance constructed from a return rather than the return alone.
Let
$y^t_{\updateFn(H), a} = \stopGrad{f\subex*{\updateFn(H), a; \odpDecision^t}}$
be the NeuRD preference value for action $a$ on sampled history $H$ (conditioned only on agent state $\updateFn(H)$ and not the underlying history), where $y^t_{\updateFn(H), a}$'s dependence on $\odpDecision^t$ is blocked by the ``stop gradient'' operator $\stopGrad{\cdot}$.
Let
$\regret_{HA}(a, \strat^t; \daimonStrat^t)
  = \frac{\RandomReturn_{HA}(\strat^t; \daimonStrat^t)}{\strat^t\subex*{A \given \updateFn(H)}} \subex*{ \ind{a = A} - \strat^t\subex*{A \given \updateFn(H)} }$
be an importance corrected Monte-Carlo estimate of the advantage from history $HA$ following sampled action $A \sim \strat^t\subex*{\updateFn(H)}$.
The Monte-Carlo NeuRD update objective is the sum of squared differences between
$y^t_{\updateFn(H), a} + \stepSize^t \regret_{HA}(a, \strat^t; \daimonStrat^t)$
and
$f\subex*{\updateFn(H), a; \odpDecision^t}$
across each action $a$.\footnote{$y^t_{\updateFn(H), a}$ bootstraps ``into the past'', estimating the cumulative advantage for action $a$ across $t - 1$ past updates at agent state $\updateFn(H)$.}
The update itself is then
\begin{align}
  \odpDecision^{t + 1}
    &= \odpDecision^t
      - \sum_{a \in \Actions(H)}
        \grad_{\odpDecision^t} \dfrac{1}{2} \subex*{
          y^t_{\updateFn(H), a} + \stepSize^t \regret_{HA}(a, \strat^t; \daimonStrat^t)
          - f\subex*{\updateFn(H), a; \odpDecision^t}
        }^2\\
    &= \odpDecision^t
      + \stepSize^t
        \sum_{a \in \Actions(H)}
          \regret_{HA}(a, \strat^t; \daimonStrat^t)
          \grad_{\odpDecision^t} f\subex*{\updateFn(H), a; \odpDecision^t}\\
    &= \odpDecision^t
      + \stepSize^t
        \dfrac{\RandomReturn_{HA}(\strat^t; \daimonStrat^t)}{\strat^t\subex*{A \given \updateFn(H)}}
          \sum_{a \in \Actions(H)}
            \subex*{ \ind{a = A} - \strat^t\subex*{A \given \updateFn(H)} }
              \grad_{\odpDecision^t} f\subex*{\updateFn(H), a; \odpDecision^t}.
    \label{eq:mcNeurdUpdate}
\end{align}

There are two differences between \cref{eq:mcSpgUpdate} and \cref{eq:mcNeurdUpdate}: the sampled action's importance correction is nullified in the SPG update but not the NeuRD update, and each unsampled action $a \ne A$ contributes
$-\strat^t\subex*{a \given \updateFn(H)} \grad_{\odpDecision^t} f\subex*{\updateFn(H), a; \odpDecision^t}$
in the SPG update instead of
$-\strat^t\subex*{A \given \updateFn(H)} \grad_{\odpDecision^t} f\subex*{\updateFn(H), A; \odpDecision^t}$
in the NeuRD update.
Since actions with larger probabilities are sampled more often, the latter contributions will tend to have a larger magnitude in the NeuRD update, but the practical significance of this difference is unclear.
However, the difference in the influence of the importance correction will clearly cause the NeuRD update to have higher variance than the SPG update.
We now describe the idea of capped implicit exploration and its application to NeuRD where the influence of the importance correction can be tuned between these two extremes.
We will also present theory to motivate the use of an intermediate amount of importance correction influence in practice.

\section{Capped Implicit Exploration}

Considering a repeated POHP bandit setting again, given pure policy $\PureStrat \sim \strat$ sampled from mixed policy $\strat$, the \emph{capped implicit exploration} (\emph{CIX}) estimate of the utility function is
$\est{\utility}: \pureStrat; \strat, \daimonStrat, \PureStrat, \ixParam \mapsto
  \ind{\PureStrat = \pureStrat}\utility(\pureStrat; \daimonStrat) / \CixCorrection(\PureStrat; \strat, \ixParam)$
where
$\CixCorrection(\PureStrat; \strat, \ixParam) = \min\set{1, \strat(\PureStrat) + \ixParam}$
for an implicit exploration parameter $\ixParam \in [0, 1]$.
The cap on the denominator potentially removes bias when $\strat(\PureStrat)$ is already large.
The same regret bound can be proven for CIX as the original implicit exploration using largely the same proof as that presented in Chapter 12 of \textcite{lattimore2020bandit}, though here we present the result as a black-box reduction to full monitoring.
\begin{theorem}
  \label{thm:cix}
  Assume that expected return is bounded in $[-\maxReturn, 0]$.
If, on each round $t \ge 1$, the CIX utility function with parameter $\ixParam^t \in (0, 1]$,
$\est{\utility}^t: \pureStrat \mapsto \est{\utility}(\pureStrat; \strat^t, \daimonStrat^t, \PureStrat^t, \ixParam^t)$,
is given to a full monitoring algorithm that chooses mixed policy $\strat^t$, then the bandit algorithm that chooses pure policy $\PureStrat^t \sim \strat^t$ upper bounds the cumulative regret for not choosing pure policy $\pureStrat$ on all $T$ rounds, with probability $1 - \delta$, as
$\regret^{1:T}(\pureStrat)
    \le
      g(T, \abs{\PureStrategySet}, \maxReturn, \delta, \tuple{\ixParam^t}_{t = 1}^T)
      + h(T, \abs{\PureStrategySet}, \maxReturn, \delta, \tuple{\ixParam^t}_{t = 1}^T)$,
where $g(T, \abs{\PureStrategySet}, \maxReturn, \delta, \tuple{\ixParam^t}_{t = 1}^T)$ is the algorithm's expected cumulative regret under the CIX utility functions and
\begin{align*}
  h(T, \abs{\PureStrategySet}, \maxReturn, \delta, \tuple{\ixParam^t}_{t = 1}^T)
    = \maxReturn \abs{\PureStrategySet} \sum_{t = 1}^T \ixParam^t
      + \dfrac{\maxReturn}{2\ixParam^{\textsc{min}}} \log\subex*{\dfrac{\abs{\PureStrategySet} + 1}{\delta}}
      + \dfrac{\maxReturn}{2} \log\subex*{\dfrac{\abs{\PureStrategySet} + 1}{\delta}}
\end{align*}
is slack to account for the regret that could be forced according to the statistical properties of CIX estimates, given $\ixParam^{\textsc{min}} = \min\set{\ixParam^t}_{t = 1}^T$.
If
$\ixParam^t
  = \xi
    \sqrt{
      \frac{1}{\abs{\PureStrategySet} t}
    }$
for $\xi > 0$, then with probability $1 - \delta$,
\begin{align*}
  h(T, \abs{\PureStrategySet}, \maxReturn, \delta, \tuple{\ixParam^t}_{t = 1}^T)
    &=
      \subex*{
        2 \xi
        + \dfrac{1}{2\xi}
          \log\subex*{\dfrac{\abs{\PureStrategySet} + 1}{\delta}}
      } \maxReturn \sqrt{\abs{\PureStrategySet} T}
    + \dfrac{\maxReturn}{2} \log\subex*{\dfrac{\abs{\PureStrategySet} + 1}{\delta}} \in \bigO{\sqrt{T}}.
\end{align*}
If
$g(T, \abs{\PureStrategySet}, \maxReturn, \delta, \tuple{\ixParam^t}_{t = 1}^T) \in \smallo{T}$, then $\regret^{1:T}(\pureStrat) \in \smallo{T}$ with probability $1 - \delta$. \end{theorem}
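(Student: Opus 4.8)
\emph{Proof plan.} I would follow the implicit-exploration analysis (Chapter~12 of \textcite{lattimore2020bandit}; see also \citealp{neu2015moreIx}), reorganized so that the inner algorithm appears only through its regret guarantee $g$. Fix the comparator $\pureStrat$ and write $\E_t[\cdot]$ for conditional expectation given everything determined before the draw $\PureStrat^t \sim \strat^t$. Telescoping the instantaneous regrets through the CIX estimates gives $\regret^{1:T}(\pureStrat) = A_{\mathrm{I}} + A_{\mathrm{II}} + A_{\mathrm{III}}$ with
\begin{align*}
  A_{\mathrm{I}} &= \sum_{t = 1}^T \subex*{\utility(\pureStrat; \daimonStrat^t) - \est{\utility}^t(\pureStrat)}, \\
  A_{\mathrm{II}} &= \sum_{t = 1}^T \subex*{\est{\utility}^t(\pureStrat) - \ip{\strat^t}{\est{\utility}^t}}, \\
  A_{\mathrm{III}} &= \sum_{t = 1}^T \subex*{\ip{\strat^t}{\est{\utility}^t} - \utility(\PureStrat^t; \daimonStrat^t)}.
\end{align*}
Here $A_{\mathrm{II}}$ is exactly the full-monitoring algorithm's cumulative regret for not choosing $\pureStrat$ on the sequence of utility vectors $\est{\utility}^t$ that it is fed, so it is bounded by that algorithm's regret guarantee $g$; the remaining task is to bound $A_{\mathrm{I}}$ and $A_{\mathrm{III}}$ with high probability, and these bounds become $h$.

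\emph{The cheap term $A_{\mathrm{III}}$.} Everything rests on two elementary facts about $\CixCorrection(\PureStrat; \strat, \ixParam) = \min\set{1, \strat(\PureStrat) + \ixParam}$: (a) $\strat(\pureStrat) \le \CixCorrection(\pureStrat; \strat, \ixParam) \le \strat(\pureStrat) + \ixParam$, and (b) $\CixCorrection(\PureStrat; \strat, \ixParam) \ge \ixParam$ whenever $\ixParam \le 1$ (the cap only strengthens the upper bound in (a) and is not needed for (b), which is why CIX inherits plain IX's regret bound). Since utilities are non-positive, (a) makes CIX an overestimate in conditional expectation, $\utility(\pureStrat; \daimonStrat^t) \le \E_t[\est{\utility}^t(\pureStrat)]$, and (b) gives $\abs{\ixParam^t \est{\utility}^t(\pureStrat)} \le \maxReturn$ together with the self-bounding identity $\subex*{\est{\utility}^t(\pureStrat)}^2 \le \tfrac{\maxReturn}{\ixParam^t} \subex*{-\est{\utility}^t(\pureStrat)}$. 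A short computation with (a) shows each summand of $A_{\mathrm{III}}$ lies in $[0, \maxReturn]$ with conditional mean $\E_t[\ip{\strat^t}{\est{\utility}^t} - \utility(\PureStrat^t; \daimonStrat^t)] = \sum_{\pureStrat'} \strat^t(\pureStrat')\subex*{-\utility(\pureStrat'; \daimonStrat^t)}\bigl(1 - \tfrac{\strat^t(\pureStrat')}{\CixCorrection(\pureStrat'; \strat^t, \ixParam^t)}\bigr) \le \maxReturn \abs{\PureStrategySet} \ixParam^t$, so a one-sided Bernstein-type inequality for these bounded summands, sensitive to their small conditional means, yields $A_{\mathrm{III}} \le \maxReturn \abs{\PureStrategySet} \sum_t \ixParam^t + \tfrac{\maxReturn}{2} \log\tfrac{\abs{\PureStrategySet} + 1}{\delta}$ on an event of probability at least $1 - \tfrac{\delta}{\abs{\PureStrategySet} + 1}$. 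Bounding the whole summand at once --- rather than peeling off a mean-zero $[-\maxReturn, \maxReturn]$ martingale, whose deviation would scale with $\sqrt{T}$ --- is what keeps this contribution free of $T$.

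\emph{The crux $A_{\mathrm{I}}$.} By overestimation it suffices to bound the martingale $\sum_t \subex*{\E_t[\est{\utility}^t(\pureStrat)] - \est{\utility}^t(\pureStrat)}$ from above, and here off-the-shelf concentration is inadequate: its increments reach $\maxReturn / \ixParam^{\textsc{min}}$ and its crude per-round conditional variance is of order $\maxReturn^2 / \ixParam^{\textsc{min}}$, so both Azuma and naive Bernstein produce slack growing faster than $\sqrt{T}$ when $\ixParam^{\textsc{min}} \asymp 1/\sqrt{T}$. The implicit-exploration Chernoff argument sidesteps this: combining the self-bounding identity with $\abs{\ixParam^t \est{\utility}^t(\pureStrat)} \le \maxReturn$ lets the moment-generating function $\E_t[\exp(-\lambda\,\est{\utility}^t(\pureStrat))]$ be controlled at $\lambda$ of order $\ixParam^{\textsc{min}}/\maxReturn$ \emph{without} a standalone variance term, and a union bound over the $\abs{\PureStrategySet}$ arms (so the slack holds for whichever arm serves as comparator) gives $A_{\mathrm{I}} \le \tfrac{\maxReturn}{2\ixParam^{\textsc{min}}} \log\tfrac{\abs{\PureStrategySet} + 1}{\delta}$ on an event of probability at least $1 - \tfrac{\abs{\PureStrategySet}}{\abs{\PureStrategySet} + 1}\delta$. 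Intersecting the $A_{\mathrm{I}}$ and $A_{\mathrm{III}}$ events (probability at least $1 - \delta$) and summing the three bounds gives $\regret^{1:T}(\pureStrat) \le g + h$. I expect the main obstacle to be reproducing this Chernoff estimate with exactly the stated constants --- the $\tfrac12$ and the $\ixParam^{\textsc{min}}$ --- and arranging the union-bound bookkeeping to land on $\abs{\PureStrategySet} + 1$; this is essentially the Chapter~12 argument of \textcite{lattimore2020bandit} re-derived for capped, time-varying $\ixParam^t$ and non-positive utilities.

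\emph{Specialization.} For $\ixParam^t = \xi\sqrt{1/(\abs{\PureStrategySet} t)}$ we have $\sum_{t = 1}^T \ixParam^t = \tfrac{\xi}{\sqrt{\abs{\PureStrategySet}}} \sum_{t = 1}^T t^{-1/2} \le \tfrac{2\xi\sqrt{T}}{\sqrt{\abs{\PureStrategySet}}}$ and $\ixParam^{\textsc{min}} = \ixParam^T = \xi/\sqrt{\abs{\PureStrategySet} T}$, so $\maxReturn \abs{\PureStrategySet} \sum_t \ixParam^t \le 2\xi\maxReturn\sqrt{\abs{\PureStrategySet} T}$ and $\tfrac{\maxReturn}{2\ixParam^{\textsc{min}}} = \tfrac{\maxReturn\sqrt{\abs{\PureStrategySet} T}}{2\xi}$; substituting into $h$ gives the displayed closed form, whose only $T$-dependence is through $\sqrt{T}$, hence $h \in \bigO{\sqrt{T}}$. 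Since $\bigO{\sqrt{T}} \subseteq \smallo{T}$, if $g \in \smallo{T}$ then on the same probability-$(1 - \delta)$ event $\regret^{1:T}(\pureStrat) \le g + h \in \smallo{T}$.
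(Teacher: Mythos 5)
Your plan reproduces the paper's proof almost step for step: the same three-term decomposition (your $A_{\mathrm{II}}$ is the expected full-monitoring regret under the CIX utilities, bounded by $g$; your $A_{\mathrm{I}}$ and $A_{\mathrm{III}}$ are the two CIX estimation-error terms), the same implicit-exploration Cram\'er--Chernoff mechanism for the dominant term $A_{\mathrm{I}}$ (the paper routes this through a restated Lemma 12.2 of \textcite{lattimore2020bandit}, \cref{lem:cramerChernoff}, with $\lambda^t = \ixParam^t/\strat^t(\pureStrat)$ and $\alpha^t_{\pureStrat'} = 2\ind{\pureStrat' = \pureStrat}\ixParam^t$), the same $\abs{\PureStrategySet}+1$ union-bound bookkeeping, and the same specialization arithmetic ($\sum_t \ixParam^t \le 2\xi\sqrt{T/\abs{\PureStrategySet}}$, $\ixParam^{\textsc{min}} = \xi\sqrt{1/(\abs{\PureStrategySet}T)}$).

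The one place you diverge is $A_{\mathrm{III}}$, and as described that step would not deliver the stated form of $h$. A generic one-sided Bernstein/Freedman bound for $[0,\maxReturn]$-valued summands with conditional means at most $\maxReturn\abs{\PureStrategySet}\ixParam^t$ cannot simultaneously keep coefficient one on the summed means and leave only the variance-free remainder $\frac{\maxReturn}{2}\log\frac{\abs{\PureStrategySet}+1}{\delta}$: tuning the exponential moment either inflates the mean term by a multiplicative constant or adds a deviation of order $\sqrt{\maxReturn^2\abs{\PureStrategySet}\subex*{\sum_t\ixParam^t}\log(1/\delta)}$ --- still $\smallo{T}$, so your final asymptotic claims survive, but not the exact $h$ the theorem asserts. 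The paper avoids this by staying pathwise: per round it shows $\utility(\PureStrat^t;\daimonStrat^t) - \est{\utility}^t(\strat^t) \le \ixParam^t\sum_{\pureStrat'}\est{\utility}^t(\pureStrat')$, adds and subtracts the true utilities to peel off the bias $\maxReturn\abs{\PureStrategySet}\sum_t\ixParam^t$, and then applies the \emph{same} IX Chernoff lemma a second time to the leftover $\ixParam^t$-weighted estimation errors with $\alpha^t_{\pureStrat'} = 2\ixParam^t$; that weighting is exactly what closes the exponential-moment argument with the constant $\maxReturn/2$ and no variance term. A second, smaller omission: the cap must be handled explicitly before either Chernoff application, because the identity $\est{\utility}^t(\pureStrat) = \est{\utility}(\pureStrat;\strat^t,\daimonStrat^t,\PureStrat^t,0)/\subex*{1 + \ixParam^t/\strat^t(\pureStrat)}$ that \cref{lem:cramerChernoff} relies on holds only on rounds where $\CixCorrection < 1$; the paper simply discards the capped rounds, whose error summands are nonpositive because utilities are nonpositive. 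Your facts (a) and (b) do not by themselves cover this, so add that one-line reduction to the uncapped rounds before invoking the lemma.
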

\textbf{Proof deferred to \cref{appendix}.}
The scaling factor $\xi$ could be chosen by optimizing the bound and a confidence-dependent bound could be achieved
by following Chapter 12 of \textcite{lattimore2020bandit}.

\emph{NeuRD-CIX} applies this idea to the action decisions at each agent state.
\begin{definition}
  Define
  $\est{\actionValueFn}_h: a, \strat; \daimonStrat, A, \ixParam \mapsto
    \ind{A = a}\RandomReturn_{hA}\subex*{\strat; \daimonStrat}/\CixCorrection_h(A; \strat, \ixParam)$
  as the CIX estimate of the action value function at active history $h$ given a random return from random action $A \sim \strat\subex*{\updateFn(h)}$,
  where
  $\CixCorrection_h(A; \strat, \ixParam) = \min\set*{1, \strat\subex*{a \given \updateFn(h)} + \ixParam}$.
  The CIX estimate of the advantage function is then
  \begin{align*}
    \est{\regret}_{hA}: a, \strat; \daimonStrat, \ixParam \mapsto
      \dfrac{
        \RandomReturn_{hA}\subex*{\strat; \daimonStrat}
      }{
        \CixCorrection_h(A; \strat, \ixParam)
      }
      \subex*{
        \ind{a = A} - \strat\subex*{A \given \updateFn(h)}
      }.
  \end{align*}
  Finally, after $t$ updates, the next NeuRD-CIX update on sampled history $H$ is
  \begin{align}
    \odpDecision^{t + 1}
      &= \odpDecision^t
        - \sum_{a \in \Actions(H)}
            \grad_{\odpDecision^t} \dfrac{1}{2} \subex*{
              y^t_{\updateFn(H), a} + \stepSize^t \est{\regret}_{HA}(a, \strat^t; \daimonStrat^t, \ixParam^t)
              - f\subex*{\updateFn(H), a; \odpDecision^t}
            }^2\\
      &= \odpDecision^t
        + \stepSize^t
          \sum_{a \in \Actions(H)}
            \est{\regret}_{HA}(a, \strat^t; \daimonStrat^t, \ixParam^t)
              \grad_{\odpDecision^t} f\subex*{\updateFn(H), a; \odpDecision^t}\\
      &= \odpDecision^t
        + \stepSize^t
          \dfrac{
            \RandomReturn_{HA}\subex*{\strat^t; \daimonStrat^t}
          }{
            \CixCorrection_H(A; \strat^t, \ixParam^t)
          }
            \sum_{a \in \Actions(H)}
              \subex*{
                \ind{a = A} - \strat^t\subex*{A \given \updateFn(h)}
              }
                \grad_{\odpDecision^t} f\subex*{\updateFn(H), a; \odpDecision^t}.
      \label{eq:mcNeurdCixUpdate}
  \end{align}
\end{definition}
If $\ixParam^t = 0$ for all updates $t$, NeuRD-CIX reproduces NeuRD (\cref{eq:mcNeurdCixUpdate} becomes \cref{eq:mcNeurdUpdate}).
If $\ixParam^t \ge 1$ instead, \cref{eq:mcNeurdCixUpdate} becomes
\begin{align}
  \odpDecision^{t + 1}
    &= \odpDecision^t
      + \stepSize^t
        \RandomReturn_{HA}\subex*{\strat^t; \daimonStrat^t}
          \sum_{a \in \Actions(H)}
            \subex*{
              \ind{a = A} - \strat^t\subex*{A \given \updateFn(h)}
            }
              \grad_{\odpDecision^t} f\subex*{\updateFn(H), a; \odpDecision^t},
  \label{eq:mcNeurdCixOneUpdate}
\end{align}
where the only difference between \cref{eq:mcNeurdCixOneUpdate} and the SPG update (\cref{eq:mcSpgUpdate}) is that each unsampled action $a \ne A$ contributes
$-\strat^t\subex*{A \given \updateFn(h)}
  \grad_{\odpDecision^t} f\subex*{\updateFn(H), a; \odpDecision^t}$
rather than
$-\strat^t\subex*{a \given \updateFn(h)}
  \grad_{\odpDecision^t} f\subex*{\updateFn(H), a; \odpDecision^t}$.

\section{Experiments}

\begin{figure}[t]
  \centering
  \begin{minipage}[t]{0.48\textwidth}
    \includegraphics[width=\textwidth]{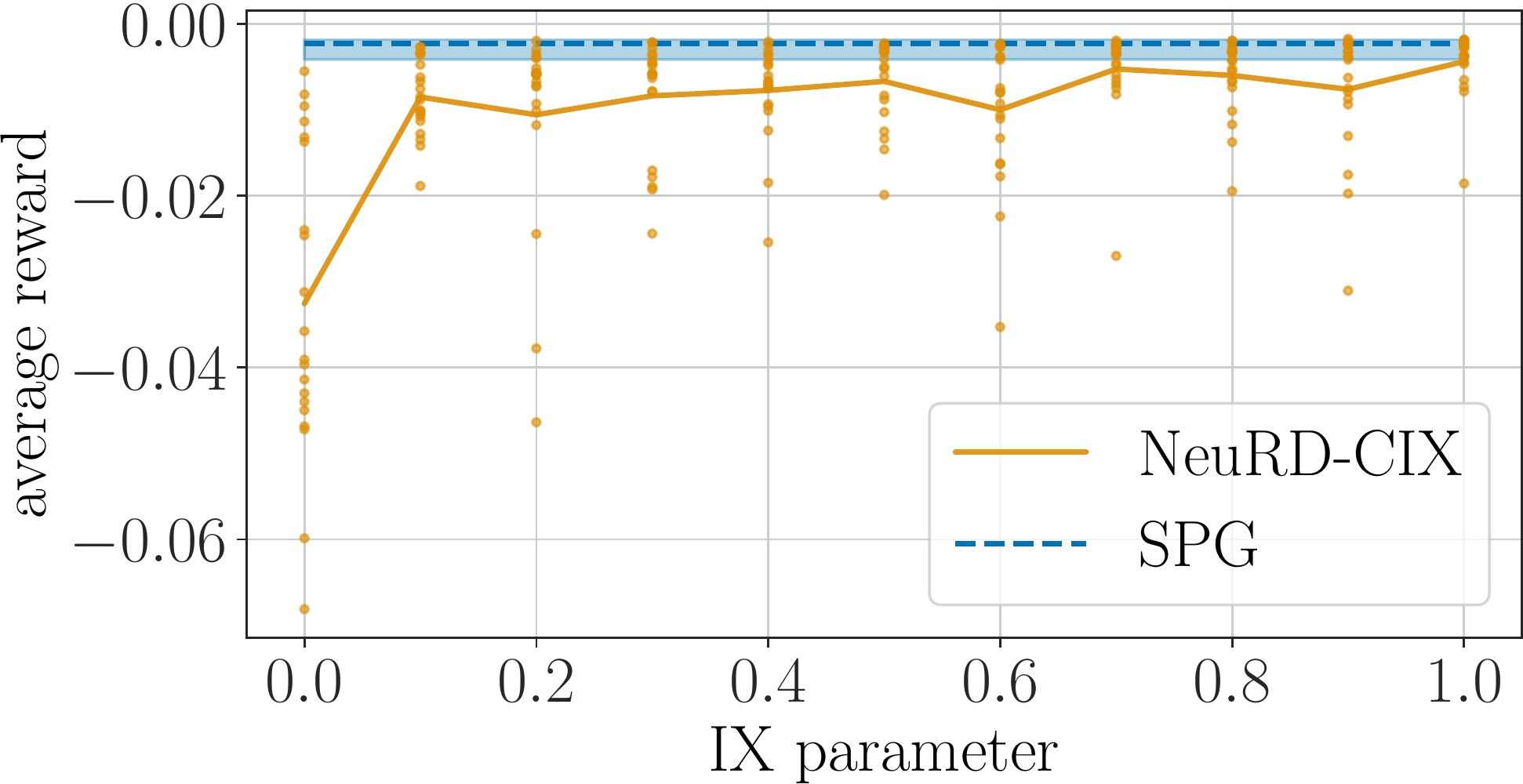}
  \end{minipage}\hfill \begin{minipage}[t]{0.48\textwidth}
    \includegraphics[width=\textwidth]{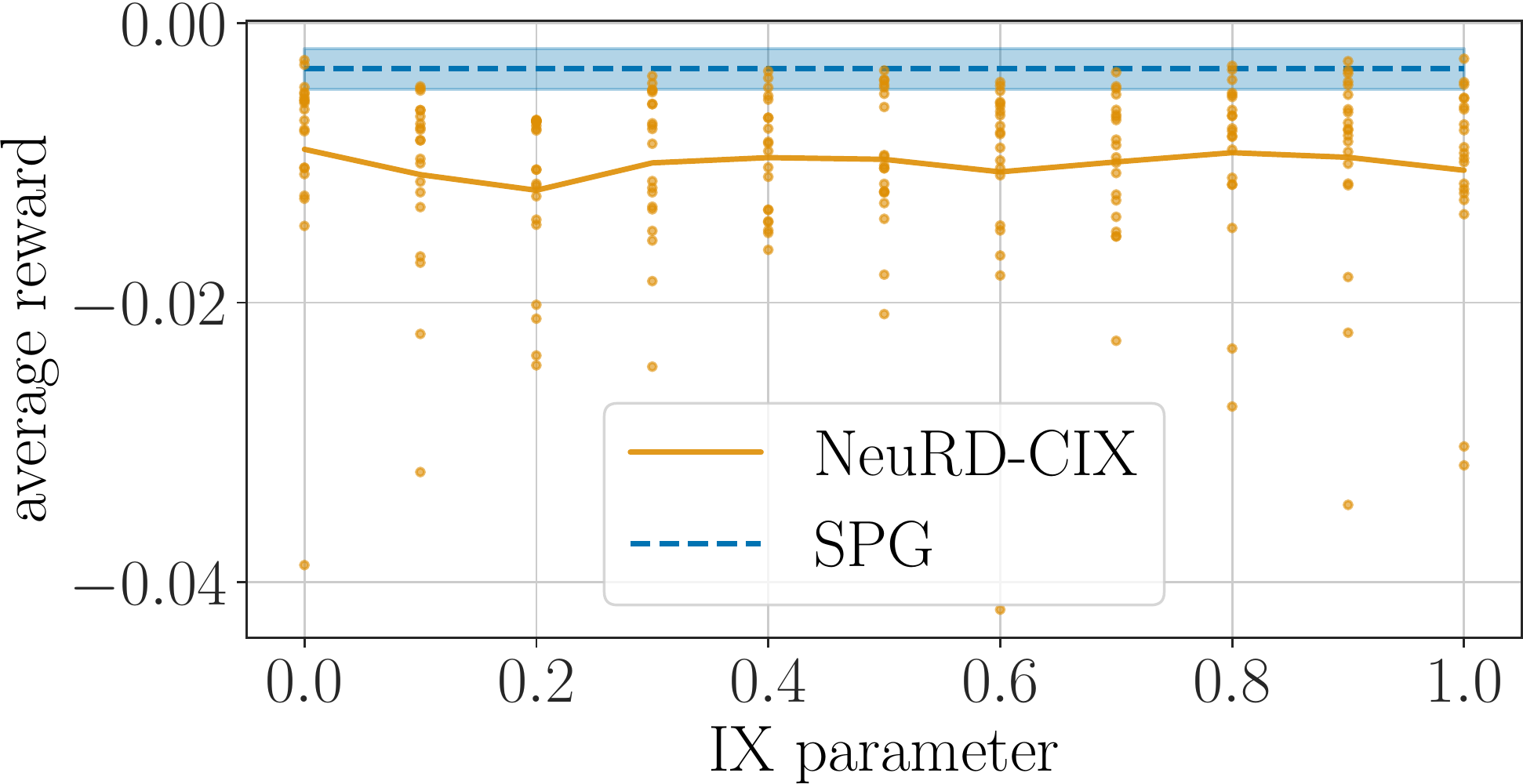}
  \end{minipage}
  \caption{Performance over $300,000$ steps in catch (left) and cart pole (right).
    Each dot is a NeuRD-CIX run and the solid line is the average across all $20$.
    The dashed line is SPG's average performance across $20$ runs and shading shows its range from best to worst.}
  \label{fig:catchAndCartPoleEnvResults}
\end{figure}

Can NeuRD-CIX be a drop-in replacement for SPG using an \emph{actor-critic}~\parencite{barto1983actorCritic} architecture?

We compare accumulated reward in \emph{catch} (from \texttt{bsuite}; \citealp{osband2020bsuite}) and \emph{cart pole} (from OpenAI Gym; \citealp{openAiGym}).
In catch, a ball is dropped from a high place and the agent must move a basket to catch the ball.
The agent receives a reward of $0$ unless they fail to catch the falling ball, then they receive $-1$.
As soon as the ball lands, another ball is dropped.
In cart pole, the agent must balance a pole by moving a cart along a single axis.
The agent receives a reward of $0$ unless the pole falls, then they receive $-1$ and the pole is reset to a near upright position.

SPG and NeuRD-CIX share the same neural actor-critic architecture.
Both actor and critic networks are linear functions over the same $2$-layer neural network embedding.
The embedding uses $256$ rectified linear units (ReLUs) plus skip connections, followed by $256$ ReLUs.
The critic predicts the $32$-step TD($\lambda= 0.9$) state-value returns.
There is no discounting for the first $32$ steps and a discount of $0.9$ on values for bootstrapping.
All neural network parameters are updated after every step with Adam~\parencite{kingma2014adam} and backgropagation, without gradient clipping or logit thresholding~\parencite{hennes2020neurd}.\footnote{The non-learning rate Adam parameters were chosen as $\beta_1 = 0$ since updates are online and $\beta_2 = 0.999$ to follow convention.}
We tried a range of Adam learning rates for SPG in catch and cart pole, and identified the best learning rate according to the average reward accumulated after $300,000$ steps and $20$ runs ($0.001797$ in catch and $0.00005$ in cart pole).\footnote{The learning rates tried in catch and cart pole were
\[\set*{
  \begin{aligned}
    &0.00001, 0.00002, 0.00004, 0.00005, 0.00008, 0.00008, 0.00010, 0.00014, 0.00016, 0.00017,\\
    &0.00023, 0.00028, 0.00032, 0.00039, 0.00046, 0.00063, 0.00065, 0.00077, 0.00108, 0.00126,\\
    &0.00129, 0.00180, 0.00215, 0.00251, 0.00300, 0.00359, 0.00500, 0.00599, 0.01000
  \end{aligned}
}\].
}
For all NeuRD-CIX runs, the Adam learning rate is set to the best SPG learning rate for the given environment.
This experimental setup ensures a favorable comparison for the SPG baseline.

The results are presented in \cref{fig:catchAndCartPoleEnvResults} as sensitivity plots of average reward achieved by NeuRD-CIX across different fixed implicit exploration (IX) parameter values ($\ixParam^t = \ixParam$).

In these single-agent environments, roughly the best performance is achieved with full implicit exploration ($\ixParam = 1$).
In catch, NeuRD-CIX is clearly less reliable without any implicit exploration ($\ixParam = 0$) and achieves nearly the same performance as SPG with full implicit exploration.
In cart pole, the implicit exploration parameter does not appear to have much of an impact, NeuRD-CIX is substantially worse than SPG and NeuRD-CIX's performance is highly variable across parameter settings.

\section{Conclusion}

We developed capped implicit exploration (CIX) estimates that bias sampled utility values so as to induce exploration and reduce estimation variance.
We showed how CIX estimates can be used in a black-box reduction to full-monitoring regret minimization and how they can be used in sequential decision making within the NeuRD-CIX algorithm.
In addition, we showed that the NeuRD-CIX algorithm interpolates along a bias--variance tradeoff between Monte-Carlo softmax policy gradient (maximal bias, lower variance) and a Monte-Carlo version of the original NeuRD algorithm (zero bias, larger variance).
\cref{thm:cix} suggests that reliable performance can be achieved with a gradually decreasing amount of implicit exploration and bias.

NeuRD-CIX has theoretical benefits over SPG, but the practical comparison is less clear.
In an empirical comparison favorable to SPG, NeuRD-CIX is competitive with but worse than SPG.

Some questions are left open by this work.
Why does NeuRD-CIX($1$) perform worse than SPG?
What architecture changes can be made to improve NeuRD-CIX's performance?
How does the IX parameter impact performance in non-stationary environments?

{
  \bibliography{references}
}

\appendix
\section{Proof of \cref{thm:cix}}
\label{appendix}

The proof of \cref{thm:cix} requires Lemma 12.2 from \textcite{lattimore2020bandit}, a simplified version of which is restated here without proof for reference.
\begin{lemma}
  \label{lem:cramerChernoff}
  Let
  $\tuple{\alpha^t_{\pureStrat}}_{t \in [1, T], \pureStrat \in \PureStrategySet}$
  and
  $\tuple{\lambda^t_{\pureStrat}}_{t \in [1, T], \pureStrat \in \PureStrategySet}$
  be real-valued random variables associated with each round $t$ of an ODP.
  Ensure that each $\alpha^t_{\pureStrat}$ and $\lambda^t_{\pureStrat}$ depend only on information revealed rounds before $t$, \ie/, they must be predictable given rounds up to and including round $t - 1$.
  In addition, ensure that
  $0
    \le
      -\frac{\alpha^t_{\pureStrat}}{\maxReturn}
        \est{\utility}(\pureStrat; \strat^t, \daimonStrat^t, \PureStrat^t, 0)
    \le
      2 \lambda^t_{\pureStrat}$
  for each $t$ and $\pureStrat$.
  Then,
  \[
    \sum_{t = 1}^T \sum_{\pureStrat \in \PureStrategySet}
      \frac{\alpha^t_{\pureStrat}}{\maxReturn} \subex*{
        \utility(\pureStrat; \daimonStrat^t)
        - \dfrac{
          \est{\utility}(\pureStrat; \strat^t, \daimonStrat^t, \PureStrat^t, 0)
        }{
          1 + \lambda^t_{\pureStrat}
        }
      }
    \le
      \log\subex*{\dfrac{1}{\delta}}
  \]
  holds with probability $1 - \delta$ for $\delta \in (0, 1)$.
\end{lemma}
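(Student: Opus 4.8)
The plan is to prove the bound by the standard Cram\'er--Chernoff (exponential-supermartingale) method: I would show that the exponentiated partial sums form a supermartingale started at $1$ and then apply Markov's inequality. Write $Z^t = \sum_{\pureStrat \in \PureStrategySet} \frac{\alpha^t_{\pureStrat}}{\maxReturn} \subex*{ \utility(\pureStrat; \daimonStrat^t) - \frac{\est{\utility}(\pureStrat; \strat^t, \daimonStrat^t, \PureStrat^t, 0)}{1 + \lambda^t_{\pureStrat}} }$ for the round-$t$ summand, so that the target inequality is $\Prob\subblock*{ \sum_{t = 1}^T Z^t \ge \log(1/\delta) } \le \delta$. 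Let $\mathcal{F}^{t-1}$ collect all information available before round $t$; by the predictability hypothesis $\alpha^t_{\pureStrat}$ and $\lambda^t_{\pureStrat}$ are $\mathcal{F}^{t-1}$-measurable, and since the daimon commits to $\daimonStrat^t$ before the agent samples its pure policy, $\strat^t$ and $\utility(\cdot; \daimonStrat^t)$ are $\mathcal{F}^{t-1}$-measurable as well. The only fresh randomness at round $t$ is thus $\PureStrat^t \sim \strat^t$.

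The heart of the argument is the one-step bound $\E\subblock*{ \exp(Z^t) \given \mathcal{F}^{t-1} } \le 1$. Here I would use that the $\ixParam = 0$ estimate $\est{\utility}(\pureStrat; \strat^t, \daimonStrat^t, \PureStrat^t, 0) = \ind{\PureStrat^t = \pureStrat}\,\utility(\pureStrat; \daimonStrat^t)/\strat^t(\pureStrat)$ is nonzero for exactly one index, the realized $\PureStrat^t$. Collecting the deterministic factors into $s^t_{\pureStrat} \as \frac{\alpha^t_{\pureStrat}}{\maxReturn}\frac{-\utility(\pureStrat; \daimonStrat^t)}{\strat^t(\pureStrat)} \ge 0$ (nonnegativity follows from $\utility \le 0$ and the left inequality of the hypothesis), $\exp(Z^t)$ factors into the $\mathcal{F}^{t-1}$-measurable prefactor $\exp\subex*{ -\sum_{\pureStrat} \strat^t(\pureStrat)\,s^t_{\pureStrat} }$ times $\exp\subex*{ s^t_{\PureStrat^t}/(1 + \lambda^t_{\PureStrat^t}) }$, so the conditional expectation of the latter is $\sum_{\pureStrat} \strat^t(\pureStrat)\exp\subex*{ s^t_{\pureStrat}/(1 + \lambda^t_{\pureStrat}) }$.

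The main obstacle is the scalar inequality that lets this sum be reabsorbed into the prefactor. The right inequality of the hypothesis, $-\frac{\alpha^t_{\pureStrat}}{\maxReturn}\est{\utility}(\pureStrat; \strat^t, \daimonStrat^t, \PureStrat^t, 0) \le 2\lambda^t_{\pureStrat}$, says exactly that $s^t_{\pureStrat} \le 2\lambda^t_{\pureStrat}$, \ie/ $\lambda^t_{\pureStrat} \ge s^t_{\pureStrat}/2$. Because $z \mapsto z/(1 + \lambda)$ decreases in $\lambda$, and because the elementary bound $z/(1 + z/2) \le \log(1 + z)$ holds for $z \ge 0$ (compare derivatives: $(1 + z/2)^2 \ge 1 + z$, and both sides vanish at $z = 0$), I would get $\exp\subex*{ s^t_{\pureStrat}/(1 + \lambda^t_{\pureStrat}) } \le \exp\subex*{ s^t_{\pureStrat}/(1 + s^t_{\pureStrat}/2) } \le 1 + s^t_{\pureStrat}$. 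Then $\sum_{\pureStrat} \strat^t(\pureStrat)(1 + s^t_{\pureStrat}) = 1 + \sum_{\pureStrat} \strat^t(\pureStrat)\,s^t_{\pureStrat} \le \exp\subex*{ \sum_{\pureStrat} \strat^t(\pureStrat)\,s^t_{\pureStrat} }$ by $1 + u \le e^u$, which is exactly the reciprocal of the prefactor; the two cancel and yield $\E\subblock*{ \exp(Z^t) \given \mathcal{F}^{t-1} } \le 1$.

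Finally, I would chain the one-step bounds. Peeling rounds from $t = T$ down to $t = 1$ and repeatedly applying the tower property with $\E\subblock*{ \exp(Z^t) \given \mathcal{F}^{t-1} } \le 1$ gives $\E\subblock*{ \exp\subex*{ \sum_{t=1}^T Z^t } } \le 1$. Markov's inequality then closes the argument: $\Prob\subblock*{ \sum_{t} Z^t \ge \log(1/\delta) } = \Prob\subblock*{ \exp\subex*{ \sum_{t} Z^t } \ge 1/\delta } \le \delta\,\E\subblock*{ \exp\subex*{ \sum_{t} Z^t } } \le \delta$. Beyond the scalar inequality, the only delicate point is the measurability bookkeeping that lets the prefactor factor out cleanly at each conditioning step, which the predictability hypothesis on $\alpha^t_{\pureStrat}$ and $\lambda^t_{\pureStrat}$ guarantees.
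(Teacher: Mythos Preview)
Your argument is correct: the exponential-supermartingale construction with the key scalar inequality $z/(1+z/2)\le\log(1+z)$ for $z\ge 0$, followed by Markov's inequality, is exactly the Cram\'er--Chernoff proof of this result. Note, however, that the paper does not supply its own proof of this lemma; it explicitly restates it ``without proof for reference'' from Lemma~12.2 of \textcite{lattimore2020bandit}, and your write-up is essentially the argument found there, so there is nothing further to compare.
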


With \cref{lem:cramerChernoff}, we can prove \cref{thm:cix}.
\begin{theorem}
  Assume that expected return is bounded in $[-\maxReturn, 0]$.
If, on each round $t \ge 1$, the CIX utility function with parameter $\ixParam^t \in (0, 1]$,
$\est{\utility}^t: \pureStrat \mapsto \est{\utility}(\pureStrat; \strat^t, \daimonStrat^t, \PureStrat^t, \ixParam^t)$,
is given to a full monitoring algorithm that chooses mixed policy $\strat^t$, then the bandit algorithm that chooses pure policy $\PureStrat^t \sim \strat^t$ upper bounds the cumulative regret for not choosing pure policy $\pureStrat$ on all $T$ rounds, with probability $1 - \delta$, as
$\regret^{1:T}(\pureStrat)
    \le
      g(T, \abs{\PureStrategySet}, \maxReturn, \delta, \tuple{\ixParam^t}_{t = 1}^T)
      + h(T, \abs{\PureStrategySet}, \maxReturn, \delta, \tuple{\ixParam^t}_{t = 1}^T)$,
where $g(T, \abs{\PureStrategySet}, \maxReturn, \delta, \tuple{\ixParam^t}_{t = 1}^T)$ is the algorithm's expected cumulative regret under the CIX utility functions and
\begin{align*}
  h(T, \abs{\PureStrategySet}, \maxReturn, \delta, \tuple{\ixParam^t}_{t = 1}^T)
    = \maxReturn \abs{\PureStrategySet} \sum_{t = 1}^T \ixParam^t
      + \dfrac{\maxReturn}{2\ixParam^{\textsc{min}}} \log\subex*{\dfrac{\abs{\PureStrategySet} + 1}{\delta}}
      + \dfrac{\maxReturn}{2} \log\subex*{\dfrac{\abs{\PureStrategySet} + 1}{\delta}}
\end{align*}
is slack to account for the regret that could be forced according to the statistical properties of CIX estimates, given $\ixParam^{\textsc{min}} = \min\set{\ixParam^t}_{t = 1}^T$.
If
$\ixParam^t
  = \xi
    \sqrt{
      \frac{1}{\abs{\PureStrategySet} t}
    }$
for $\xi > 0$, then with probability $1 - \delta$,
\begin{align*}
  h(T, \abs{\PureStrategySet}, \maxReturn, \delta, \tuple{\ixParam^t}_{t = 1}^T)
    &=
      \subex*{
        2 \xi
        + \dfrac{1}{2\xi}
          \log\subex*{\dfrac{\abs{\PureStrategySet} + 1}{\delta}}
      } \maxReturn \sqrt{\abs{\PureStrategySet} T}
    + \dfrac{\maxReturn}{2} \log\subex*{\dfrac{\abs{\PureStrategySet} + 1}{\delta}} \in \bigO{\sqrt{T}}.
\end{align*}
If
$g(T, \abs{\PureStrategySet}, \maxReturn, \delta, \tuple{\ixParam^t}_{t = 1}^T) \in \smallo{T}$, then $\regret^{1:T}(\pureStrat) \in \smallo{T}$ with probability $1 - \delta$. \end{theorem}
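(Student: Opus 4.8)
The plan is to prove the bound by a black-box reduction in the style of the Exp3-IX analysis in Chapter~12 of \textcite{lattimore2020bandit}, telescoping the true cumulative regret into three pieces and controlling each separately. Writing $\est{\utility}^t(\pureStrat)$ for the round-$t$ CIX estimate and $\ip{\strat^t}{\est{\utility}^t} = \sum_{\pureStrat' \in \PureStrategySet} \strat^t(\pureStrat')\est{\utility}^t(\pureStrat')$, I would start from
\begin{align*}
  \regret^{1:T}(\pureStrat)
    = \underbrace{\sum_{t = 1}^T \subex*{\utility(\pureStrat; \daimonStrat^t) - \est{\utility}^t(\pureStrat)}}_{(\text{i})}
    + \underbrace{\sum_{t = 1}^T \subex*{\est{\utility}^t(\pureStrat) - \ip{\strat^t}{\est{\utility}^t}}}_{(\text{ii})}
    + \underbrace{\sum_{t = 1}^T \subex*{\ip{\strat^t}{\est{\utility}^t} - \utility(\PureStrat^t; \daimonStrat^t)}}_{(\text{iii})}.
\end{align*}
Term (ii) is exactly the cumulative regret the full-monitoring algorithm suffers on the realized sequence of CIX utilities, so it is bounded by $g(T, \abs{\PureStrategySet}, \maxReturn, \delta, \tuple{\ixParam^t}_{t = 1}^T)$ by hypothesis. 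For the regret minimizers this reduction targets, the bound $g$ in fact holds pathwise, because capped implicit exploration keeps the algorithm's second-order/stability term bounded even though individual CIX estimates can be as large as $\maxReturn/\ixParam^t$; this is also why only $g \in \smallo{T}$ (in expectation or with high probability) is needed for the final conclusion.

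For terms (i) and (iii) I would invoke \cref{lem:cramerChernoff} with carefully chosen predictable multipliers. The key algebraic observation is that the CIX estimate equals the $\ixParam = 0$ (plain importance-weighted) estimate divided by $1 + \lambda^t_{\pureStrat}$ precisely when $\strat^t(\pureStrat)\subex*{1 + \lambda^t_{\pureStrat}} = \min\set{1, \strat^t(\pureStrat) + \ixParam^t}$, i.e. $\lambda^t_{\pureStrat} = \min\set{1 - \strat^t(\pureStrat),\, \ixParam^t}/\strat^t(\pureStrat)$, which is predictable since $\strat^t$ is committed before $\PureStrat^t$ is drawn. Applying \cref{lem:cramerChernoff} with multipliers $\alpha^t_{\cdot}$ supported on the comparator arm $\pureStrat$ and this $\lambda$ rescaled to meet the boundedness hypothesis $0 \le -\tfrac{\alpha^t_{\pureStrat}}{\maxReturn}\est{\utility}(\pureStrat; \strat^t, \daimonStrat^t, \PureStrat^t, 0) \le 2\lambda^t_{\pureStrat}$ controls (i); since that hypothesis forces the multipliers to be of size proportional to $\ixParam^t$, converting the resulting guarantee into a bound on the unweighted sum introduces a factor $1/\ixParam^{\textsc{min}}$, producing the $\tfrac{\maxReturn}{2\ixParam^{\textsc{min}}}\log\tfrac{\abs{\PureStrategySet}+1}{\delta}$ slack. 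Applying \cref{lem:cramerChernoff} instead with multipliers proportional to $\strat^t$ (so the estimator is bounded and a constant $\lambda$ suffices, hence no $1/\ixParam^{\textsc{min}}$ blowup) controls (iii), and bounding the per-round conditional bias of CIX under $\strat^t$ by $\maxReturn \sum_{\pureStrat'}\subex*{\min\set{1, \strat^t(\pureStrat') + \ixParam^t} - \strat^t(\pureStrat')} \le \maxReturn\abs{\PureStrategySet}\ixParam^t$ yields the remaining summand $\maxReturn\abs{\PureStrategySet}\sum_t\ixParam^t$ together with a bare $\tfrac{\maxReturn}{2}\log\tfrac{\abs{\PureStrategySet}+1}{\delta}$. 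A union bound over the $\abs{\PureStrategySet}$ arms plus the trajectory — hence $\abs{\PureStrategySet}+1$ events at level $\delta/(\abs{\PureStrategySet}+1)$ — produces the $\log\tfrac{\abs{\PureStrategySet}+1}{\delta}$ factors, establishing the first bound on $h$.

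The $\ixParam^t = \xi\sqrt{1/(\abs{\PureStrategySet}t)}$ specialization is then a routine substitution: $\ixParam^{\textsc{min}} = \ixParam^T = \xi/\sqrt{\abs{\PureStrategySet}T}$ turns $\tfrac{\maxReturn}{2\ixParam^{\textsc{min}}}$ into $\tfrac{\maxReturn\sqrt{\abs{\PureStrategySet}T}}{2\xi}$, and $\sum_{t=1}^T \ixParam^t = \tfrac{\xi}{\sqrt{\abs{\PureStrategySet}}}\sum_{t=1}^T t^{-1/2} \le \tfrac{2\xi\sqrt{T}}{\sqrt{\abs{\PureStrategySet}}}$ turns $\maxReturn\abs{\PureStrategySet}\sum_t \ixParam^t$ into at most $2\xi\maxReturn\sqrt{\abs{\PureStrategySet}T}$; adding the untouched $\tfrac{\maxReturn}{2}\log\tfrac{\abs{\PureStrategySet}+1}{\delta}$ gives the stated closed form, which is $\bigO{\sqrt{T}}$, hence in $\smallo{T}$. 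The last claim is then immediate: $\regret^{1:T}(\pureStrat) \le g + h$ with $h \in \smallo{T}$, so $g \in \smallo{T}$ forces $\regret^{1:T}(\pureStrat) \in \smallo{T}$ with probability $1 - \delta$.

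The main obstacle I anticipate is the concentration step for (i) and (iii): because CIX estimates are heavy-tailed, a naive Azuma/Bernstein treatment would inject a spurious $\maxReturn\sqrt{T\log(1/\delta)}$ term that does not appear in $h$. Avoiding it requires the one-sided Cram\'er--Chernoff bound of \cref{lem:cramerChernoff}, and making that lemma apply cleanly is delicate — the predictable multipliers $\alpha^t_{\pureStrat}$ and $\lambda^t_{\pureStrat}$ must be chosen so that the pathwise boundedness hypothesis holds, the $\est{\utility}(\,\cdot\,;0)/(1+\lambda^t_{\pureStrat})$ factors reassemble into precisely the CIX estimates (or a controlled over-estimate), and the effective coefficient on $\utility(\pureStrat;\daimonStrat^t) - \est{\utility}^t(\pureStrat)$ does not degenerate. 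The trade-off among these constraints is exactly what pins down the shapes ($1/\ixParam^{\textsc{min}}$ versus constant) of the two logarithmic terms in $h$, and this bookkeeping is where closely mirroring Chapter~12 of \textcite{lattimore2020bandit} is essential.
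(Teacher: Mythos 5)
Your overall plan is the paper's: the same three-term decomposition (yours is in fact the sign-correct form of the paper's), the same appeal to \cref{lem:cramerChernoff}, the same $\abs{\PureStrategySet}+1$ union bound, and the same closing computations ($\sum_t \ixParam^t \le 2\xi\sqrt{T/\abs{\PureStrategySet}}$, $\ixParam^{\textsc{min}} = \xi/\sqrt{\abs{\PureStrategySet}T}$). However, two of your concentration steps would not go through as sketched.

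For term (iii), applying \cref{lem:cramerChernoff} with multipliers $\alpha^t_{\pureStrat'} \propto \strat^t(\pureStrat')$ controls the deviation of $\ip{\strat^t}{\est{\utility}^t}$ from $\ip{\strat^t}{\utility(\cdot\,;\daimonStrat^t)} = \utility(\strat^t;\daimonStrat^t)$, not from $\utility(\PureStrat^t;\daimonStrat^t)$, which is what your (iii) actually contains. Bridging $\utility(\strat^t;\daimonStrat^t)$ and $\utility(\PureStrat^t;\daimonStrat^t)$ would need a separate Azuma-type argument and would inject a $\maxReturn\sqrt{T\log(1/\delta)}$ term that does not appear in $h$ (it would preserve the $\bigO{\sqrt{T}}$ conclusion but not the stated form of $h$). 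The paper never faces this gap: since both $\ip{\strat^t}{\est{\utility}^t}$ and $\utility(\PureStrat^t;\daimonStrat^t)$ depend only on the sampled arm, one has the pathwise identity $\ip{\strat^t}{\est{\utility}^t} - \utility(\PureStrat^t;\daimonStrat^t) = \utility(\PureStrat^t;\daimonStrat^t)\subex*{\strat^t(\PureStrat^t)/\CixCorrection(\PureStrat^t;\strat^t,\ixParam^t) - 1} \le \ixParam^t \sum_{\pureStrat'}\subex*{-\est{\utility}^t(\pureStrat')}$, and only the residual $\sum_t \ixParam^t \sum_{\pureStrat'}\subex*{\utility(\pureStrat';\daimonStrat^t) - \est{\utility}^t(\pureStrat')}$ is fed to \cref{lem:cramerChernoff}, with $\alpha^t_{\pureStrat'} = 2\ixParam^t$ on every arm, yielding $\maxReturn\abs{\PureStrategySet}\sum_t\ixParam^t$ plus the bare $\tfrac{\maxReturn}{2}\log\tfrac{\abs{\PureStrategySet}+1}{\delta}$.

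For term (i), your identity folding the cap into $\lambda^t_{\pureStrat} = \min\set{1-\strat^t(\pureStrat),\,\ixParam^t}/\strat^t(\pureStrat)$ is correct, but the boundedness hypothesis of \cref{lem:cramerChernoff} then forces $\alpha^t_{\pureStrat} \le 2\min\set{1-\strat^t(\pureStrat),\,\ixParam^t}$, not ``proportional to $\ixParam^t$'': on capped rounds this can be far smaller than $2\ixParam^t$, and it is zero when $\strat^t(\pureStrat)=1$, so converting the weighted guarantee into a bound on the unweighted sum does not give the $1/\ixParam^{\textsc{min}}$ factor you claim. The missing idea --- and the one place where the non-positivity of returns is essential to the \emph{capped} estimator --- is that on capped rounds $\est{\utility}^t(\pureStrat) \in \set{\utility(\pureStrat;\daimonStrat^t),\,0}$, hence $\utility(\pureStrat;\daimonStrat^t) - \est{\utility}^t(\pureStrat) \le 0$, so those rounds can simply be discarded; \cref{lem:cramerChernoff} is then applied only over the uncapped rounds, where the CIX estimate coincides with the plain IX estimate, with $\lambda^t_{\pureStrat} = \ixParam^t/\strat^t(\pureStrat)$ and $\alpha^t_{\pureStrat'} = 2\ind{\pureStrat' = \pureStrat}\ixParam^t$, which is what produces $\tfrac{\maxReturn}{2\ixParam^{\textsc{min}}}\log\tfrac{\abs{\PureStrategySet}+1}{\delta}$. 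The same capped-round dropping step is also needed before the second application of the lemma in (iii). Without these two repairs the stated constants in $h$ do not materialize, even though the final $\smallo{T}$ claim would survive a weaker bound.
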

\begin{proof}
  Let $\bs{\strat} = \tuple{ \strat^t }_{t = 1}^T$ and $\PureStrat = \tuple{ \PureStrat^t }_{t = 1}^T$ be the sequence of full monitoring and bandit policies generated by the algorithm across $T$ rounds.
  Let the expected cumulative payoff of $\bs{\strat}$ under the true utility function $\utility$ be
  $\utility^{1:T}(\bs{\strat}) = \sum_{t = 1}^T \utility\subex*{\strat^t; \daimonStrat^t}$
  and that under the CIX utility functions be
  $\est{\utility}^{1:T}(\bs{\strat}) = \sum_{t = 1}^T \est{\utility}^t\subex*{\strat^t}$.
  Since pure policies are pont-mass mixed policies,
  $\utility^{1:T}(\bs{\PureStrat}) = \sum_{t = 1}^T \utility\subex*{\PureStrat^t; \daimonStrat^t}$
  and
  $\est{\utility}^{1:T}(\bs{\PureStrat}) = \sum_{t = 1}^T \est{\utility}^t\subex*{\PureStrat^t}$.
  Overload this notation for a single pure policy as
  $\utility^{1:T}(\pureStrat) = \sum_{t = 1}^T \utility\subex*{\pureStrat; \daimonStrat^t}$.

  The proof works by separately bounding differences between expected and estimated values.
  Starting from the regret that the agent actually suffers, we bring together the necessary terms by adding and subtracting the same values as
  \begin{align}
    \regret^{1:T}(\pureStrat)
      &=
        \utility^{1:T}(\pureStrat) - \utility^{1:T}(\bs{\PureStrat})
        + \overbrace{
          \est{\utility}^{1:T}(\pureStrat) - \est{\utility}^{1:T}(\pureStrat)
        }^0
        + \overbrace{
          \est{\utility}^{1:T}(\bs{\strat}) - \est{\utility}^{1:T}(\bs{\strat})
        }^0\\
      &=
        \underbrace{
          \est{\utility}^{1:T}(\pureStrat) - \est{\utility}^{1:T}(\bs{\strat})
        }_{
          \text{Expected full monitoring regret under $\est{\utility}$.}
        }
        + \underbrace{
          \utility^{1:T}(\pureStrat) - \est{\utility}^{1:T}(\pureStrat)
        }_{
          \text{CIX estimation error for $\pureStrat$.}
        }
        + \underbrace{
          \utility^{1:T}(\bs{\PureStrat}) - \est{\utility}^{1:T}(\bs{\strat})
        }_{
          \text{CIX estimation error for $\bs{\PureStrat}$.}
        }\\
      &\le
        g(T, \abs{\PureStrategySet}, \delta, \tuple{\ixParam^t}_{t = 1}^T)
        + \utility^{1:T}(\pureStrat) - \est{\utility}^{1:T}(\pureStrat)
        + \sum_{t = 1}^T \utility(\PureStrat^t; \daimonStrat^t) - \est{\utility}^t(\strat^t).
  \end{align}

  We can simplify each
  $\utility(\PureStrat^t; \daimonStrat^t) - \est{\utility}^t(\strat^t)$
  using algebra and the fact that
  $\CixCorrection(\pureStrat'; \strat^t, \ixParam^t) - \strat^t(\pureStrat') \le \strat^t(\pureStrat') + \ixParam^t - \strat^t(\pureStrat') = \ixParam^t$
  for all pure policies
  $\pureStrat'$.
  \begin{align}
    \utility(\PureStrat^t; \daimonStrat^t) - \est{\utility}^t(\strat^t)
      &=
        \sum_{\pureStrat' \in \PureStrategySet}
          \ind{\PureStrat^t = \pureStrat'} \utility(\pureStrat'; \daimonStrat^t)
          - \strat^t(\pureStrat') \est{\utility}(\pureStrat'; \daimonStrat^t)\\
      &=
        \sum_{\pureStrat' \in \PureStrategySet}
          \subex*{
            1
            - \dfrac{\strat^t(\pureStrat')}{\CixCorrection(\pureStrat'; \strat^t, \ixParam^t)}
          }
          \ind{\PureStrat^t = \pureStrat'} \utility(\pureStrat'; \daimonStrat^t)\\
      &=
        \sum_{\pureStrat' \in \PureStrategySet}
          \dfrac{
            \CixCorrection(\pureStrat'; \strat^t, \ixParam^t)
            - \strat^t(\pureStrat')
          }{\CixCorrection(\pureStrat'; \strat^t, \ixParam^t)}
          \ind{\PureStrat^t = \pureStrat'} \utility(\pureStrat'; \daimonStrat^t)\\
      &\le
        \sum_{\pureStrat' \in \PureStrategySet}
          \dfrac{
            \ixParam^t
          }{\CixCorrection(\pureStrat'; \strat^t, \ixParam^t)}
          \ind{\PureStrat^t = \pureStrat'} \utility(\pureStrat'; \daimonStrat^t)\\
      &=
        \ixParam^t \sum_{\pureStrat' \in \PureStrategySet}
          \est{\utility}^t(\pureStrat').
  \end{align}
  $\est{\utility}^t(\pureStrat')$ also needs to be compared to true payoffs, so we add and subtract $\utility(\pureStrat'; \daimonStrat^t)$ so that
  \begin{align}
    \sum_{t = 1}^T \utility(\PureStrat^t; \daimonStrat^t) - \est{\utility}^t(\strat^t)
      &\le
        \sum_{t = 1}^T
          \ixParam^t
            \sum_{\pureStrat' \in \PureStrategySet}
              \est{\utility}^t(\pureStrat')
              + \overbrace{
                \utility(\pureStrat'; \daimonStrat^t)
                - \utility(\pureStrat'; \daimonStrat^t)
              }^0\\
      &=
        \sum_{t = 1}^T
          \ixParam^t
            \sum_{\pureStrat' \in \PureStrategySet}
              \subex*{-\utility(\pureStrat'; \daimonStrat^t)}
          + \ixParam^t
            \sum_{\pureStrat' \in \PureStrategySet}
              \utility(\pureStrat'; \daimonStrat^t) - \est{\utility}^t(\pureStrat')\\
      &\le
        \maxReturn \abs{\PureStrategySet} \sum_{t = 1}^T
          \ixParam^t
        + \sum_{t = 1}^T
          \ixParam^t
            \sum_{\pureStrat' \in \PureStrategySet}
              \utility(\pureStrat'; \daimonStrat^t) - \est{\utility}^t(\pureStrat'),
  \end{align}
  where the second inequality follows from the fact that $-\utility(\pureStrat'; \daimonStrat^t) \le \maxReturn$.

  To summarize, our bound is now
  \begin{align}
    \regret^{1:T}(\pureStrat)
      &\le
        g(T, \abs{\PureStrategySet}, \delta, \tuple{\ixParam^t}_{t = 1}^T)
        + \maxReturn \abs{\PureStrategySet} \sum_{t = 1}^T
          \ixParam^t
        + \utility^{1:T}(\pureStrat) - \est{\utility}^{1:T}(\pureStrat)
        + \sum_{t = 1}^T
          \ixParam^t
            \sum_{\pureStrat' \in \PureStrategySet}
              \utility(\pureStrat'; \daimonStrat^t) - \est{\utility}^t(\pureStrat').
  \end{align}

  Consider the difference
  \begin{align}
    \utility^{1:T}(\pureStrat) - \est{\utility}^{1:T}(\pureStrat)
      &= \sum_{t = 1}^T \utility(\pureStrat; \daimonStrat^t) - \est{\utility}^t(\pureStrat).
  \end{align}
  On each round $t$ where
  $\CixCorrection(\PureStrat^t; \strat^t, \ixParam^t) = 1$
  (those where $\strat^t(\PureStrat^t) + \ixParam^t \ge 1$),
  $\est{\utility}^t(\pureStrat)$ is equal to either $\utility^{1:T}(\pureStrat)$ or zero, depending on whether or not $\PureStrat^t = \pureStrat$.
  Since rewards are non-positive,
  $\utility(\pureStrat; \daimonStrat^t) - \est{\utility}^t(\pureStrat) \le 0$
  on each of these rounds.
  \def\uncappedRounds{\mathcal{T}}
  Let
  $\uncappedRounds = \set*{t \in \set{1, \ldots, T} \where \CixCorrection(\PureStrat^t; \strat^t, \ixParam^t) < 1}$
  be these rounds where the implicit exploration adjustment is not capped.
  Therefore,
  \begin{align}
    \utility^{1:T}(\pureStrat) - \est{\utility}^{1:T}(\pureStrat)
      &\le \sum_{t \in \uncappedRounds}
        \utility(\pureStrat; \daimonStrat^t) - \est{\utility}^t(\pureStrat)\\
      &\le \sum_{t \in \uncappedRounds}
        \utility(\pureStrat; \daimonStrat^t) - \est{\utility}^t(\pureStrat)
  \end{align}
  Now the steps in the proof of Lemma 12.3 of \textcite{lattimore2020bandit} can be repeated exactly to apply \cref{lem:cramerChernoff}, except accounting for a dynamic implicit exploration parameter.
  \begin{align}
    \utility^{1:T}(\pureStrat) - \est{\utility}^{1:T}(\pureStrat)
      &\le \sum_{t \in \uncappedRounds}
        \utility(\pureStrat; \daimonStrat^t) - \est{\utility}^t(\pureStrat)\\
      &\le \dfrac{1}{2 \ixParam^{\textsc{min}}}
        \sum_{t \in \uncappedRounds}
          2\ixParam^t
            \subex*{
              \utility(\pureStrat; \daimonStrat^t)
              - \dfrac{1}{
                1 + \dfrac{\ixParam^t}{\strat^t(\pureStrat)}
              } \dfrac{
                \ind{\PureStrat^t = \pureStrat} \utility(\pureStrat; \daimonStrat^t)
              }{
                \strat^t(\pureStrat)
              }
            }\\
      &= \dfrac{\maxReturn}{2 \ixParam^{\textsc{min}}}
        \sum_{t \in \uncappedRounds}
          \dfrac{2\ixParam^t}{\maxReturn}
            \subex*{
              \utility(\pureStrat; \daimonStrat^t)
              - \dfrac{1}{
                1 + \dfrac{\ixParam^t}{\strat^t(\pureStrat)}
              }
              \hat{\utility}^t(\pureStrat; \strat^t, \daimonStrat^t, \PureStrat^t, 0)
            }.
    \shortintertext{We can now apply \cref{lem:cramerChernoff} where
      $\lambda^t_{\pureStrat'} = \frac{\ixParam^t}{\strat^t(\pureStrat)}$
      and
      $\alpha^t_{\pureStrat'} = 2 \ind{\pureStrat' = \pureStrat} \ixParam^t$.
      The result is that}
    \utility^{1:T}(\pureStrat) - \est{\utility}^{1:T}(\pureStrat)
      &\le \dfrac{\maxReturn}{2 \ixParam^{\textsc{min}}} \log\subex*{\dfrac{1}{\delta'}}
    \shortintertext{
      holds with probability $1 - \delta'$ for any $\delta' \in (0, 1)$.
      Choosing $\delta' = \frac{\delta}{\abs{\PureStrategySet} + 1}$ yields
    }
    \utility^{1:T}(\pureStrat) - \est{\utility}^{1:T}(\pureStrat)
      &\le \dfrac{\maxReturn}{2 \ixParam^{\textsc{min}}} \log\subex*{\dfrac{\abs{\PureStrategySet} + 1}{\delta}}.
  \end{align}

  To summarize again, our bound is now
  \begin{align}
    \regret^{1:T}(\pureStrat)
      &\le
        g(T, \abs{\PureStrategySet}, \delta, \tuple{\ixParam^t}_{t = 1}^T)
        + \maxReturn \abs{\PureStrategySet} \sum_{t = 1}^T
          \ixParam^t
        + \dfrac{\maxReturn}{2 \ixParam^{\textsc{min}}} \log\subex*{\dfrac{\abs{\PureStrategySet} + 1}{\delta}}
        + \sum_{t = 1}^T
          \ixParam^t
            \sum_{\pureStrat' \in \PureStrategySet}
              \utility(\pureStrat'; \daimonStrat^t) - \est{\utility}^t(\pureStrat').
  \end{align}

  The last term to simplify is
  $\sum_{t = 1}^T
    \ixParam^t
      \sum_{\pureStrat' \in \PureStrategySet}
        \utility(\pureStrat'; \daimonStrat^t) - \est{\utility}^t(\pureStrat')$
  and we largely repeated our last simplification procedure.
  The rounds where
  $\CixCorrection(\PureStrat^t; \strat^t, \ixParam^t) = 1$
  can again be ignored because
  $\utility(\pureStrat'; \daimonStrat^t) - \est{\utility}^t(\pureStrat') \le 0$
  on those rounds, so
  \begin{align}
    \sum_{t = 1}^T
      \ixParam^t
        \sum_{\pureStrat' \in \PureStrategySet}
          \utility(\pureStrat'; \daimonStrat^t) - \est{\utility}^t(\pureStrat')
      &\le
        \sum_{t \in \uncappedRounds}
          \ixParam^t
            \sum_{\pureStrat' \in \PureStrategySet}
              \utility(\pureStrat'; \daimonStrat^t) - \est{\utility}^t(\pureStrat')\\
      &=
        \dfrac{\maxReturn}{2} \sum_{t \in \uncappedRounds}
          \sum_{\pureStrat' \in \PureStrategySet}
            \dfrac{2\ixParam^t}{\maxReturn}
              \subex*{
                \utility(\pureStrat'; \daimonStrat^t) - \est{\utility}^t(\pureStrat')
              }.
    \shortintertext{
      Applying \cref{lem:cramerChernoff} again with
      $\lambda^t_{\pureStrat'} = \frac{\ixParam}{\strat^t(\pureStrat')}$
      and
      $\alpha^t_{\pureStrat'} = 2 \ixParam^t$,
    }
    \sum_{t = 1}^T
      \ixParam^t
        \sum_{\pureStrat' \in \PureStrategySet}
          \utility(\pureStrat'; \daimonStrat^t) - \est{\utility}^t(\pureStrat')
      &\le
        \dfrac{\maxReturn}{2} \log\subex*{
          \dfrac{\abs{\PureStrategySet} + 1}{\delta}
        }.
  \end{align}

  Therefore, the final parameterized bound is
  \begin{align}
    \regret^{1:T}(\pureStrat)
      &\le
        g(T, \abs{\PureStrategySet}, \delta, \tuple{\ixParam^t}_{t = 1}^T)
        + \maxReturn \abs{\PureStrategySet} \sum_{t = 1}^T
          \ixParam^t
        + \dfrac{\maxReturn}{2 \ixParam^{\textsc{min}}} \log\subex*{\dfrac{\abs{\PureStrategySet} + 1}{\delta}}
        + \dfrac{\maxReturn}{2} \log\subex*{
          \dfrac{\abs{\PureStrategySet} + 1}{\delta}
        },
  \end{align}
  as required.

  The no-regret bound is realized by using the integral trick to bound
  \begin{align}
    \sum_{t = 1}^T
      \xi
        \sqrt{
          \frac{1}{\abs{\PureStrategySet} t}
        }
      \le
        2 \xi
          \sqrt{
            \frac{T}{\abs{\PureStrategySet}}
          }
  \end{align}
  and by recognizing that
  $\ixParam^{\textsc{min}} = \xi
    \sqrt{
      \frac{1}{\abs{\PureStrategySet} T}
    }$.
\end{proof}

\end{document}